\documentclass[11pt]{article}

\usepackage{amsmath,amsfonts,bm}

\def\eqref#1{equation~\ref{#1}}

\def\1{\bm{1}}

\DeclareMathAlphabet{\mathsfit}{\encodingdefault}{\sfdefault}{m}{sl}
\SetMathAlphabet{\mathsfit}{bold}{\encodingdefault}{\sfdefault}{bx}{n}

\def\gA{{\mathcal{A}}}

\def\gD{{\mathcal{D}}}

\def\gL{{\mathcal{L}}}

\def\gS{{\mathcal{S}}}

\newcommand{\E}{\mathbb{E}}

\newcommand{\R}{\mathbb{R}}

\newcommand{\Var}{\mathrm{Var}}

\usepackage[margin=1in]{geometry}
\usepackage{amsmath,amssymb,amsfonts, amsthm}
\usepackage{graphicx}
\usepackage{booktabs}
\usepackage{microtype}
\usepackage{xcolor}
\usepackage{hyperref}
\usepackage{url}
\usepackage{algorithm}
\usepackage{algpseudocode}
\usepackage{thm-restate}
\usepackage{multirow}
\usepackage{tikz}
\usepackage{graphicx}
\usepackage{subcaption}
\usepackage{wrapfig}
\usepackage{natbib}

\usetikzlibrary{positioning, arrows.meta}

\hypersetup{
    colorlinks=true,
    linkcolor=blue,
    citecolor=blue,
    urlcolor=blue
}

\theoremstyle{definition}

\author{
    Hsiao-Ru Pan \\
    MPI-IS \\
    \and
    Florent Draye \\
    MPI-IS \\
    \and
    Bernhard Sch\"olkopf \\
    MPI-IS \\
    ELLIS Institute T\"ubingen
}

\date{}

\title{ABC: Advantage-Based Control Variates for Reinforcement Learning with Verifiable Rewards}

\begin{document}

\maketitle

\begin{abstract}
Recent progress in reinforcement learning with verifiable rewards (RLVR) has highlighted the effectiveness of simple critic-free policy-gradient methods such as Group Relative Policy Optimization (GRPO).
In contrast, actor-critic methods rely on learned value functions whose approximation error can introduce bias through commonly used advantage estimators such as temporal-difference error.
Motivated by this observation, we revisit trajectory-level control variates through an advantage–value formulation, which we call Advantage-Based Control Variates (ABC).
This formulation reveals that the covariance structure is closely related to the return decomposition used in Direct Advantage Estimation (DAE).
Finally, we combine ABC with DAE into a single actor-critic algorithm and evaluate it in an offline-to-online RLVR setting, where the critic is first trained on previously collected trajectories and adapted during online learning.
On mathematical reasoning tasks, ABC achieves performance competitive with GRPO using substantially fewer online optimization steps.
\end{abstract}

\section{Introduction}

Reinforcement learning with verifiable rewards (RLVR) has become a powerful approach for improving large language models (LLMs) beyond supervised fine-tuning, particularly in domains such as mathematical reasoning, code generation, and formal theorem proving, where rewards can be computed automatically~\citep{lambert2024tulu, shao2024deepseekmath, guo2025deepseek}.

Actor-critic methods~\citep{konda1999actor, mnih2016asynchronous} such as Proximal Policy Optimization (PPO) estimate the advantage function using a learned value function~\citep{schulman2015high, schulman2017proximal} to reduce policy optimization variance, and remain a strong baseline for deep RL applications~\citep{rudin2022learning, radosavovic2024real}.
In LLM post-training, however, maintaining a separate critic can be computationally expensive.
As such, modern RLVR methods often adopt Group Relative Policy Optimization (GRPO)~\citep{shao2024deepseekmath}, which removes the learned critic and estimates advantages by comparing groups of on-policy rollouts.
GRPO has achieved strong results despite discarding the learned critic, raising the possibility that learning value functions in this case can be difficult and the bias from the approximation of the advantages can offset their variance-reduction benefits.
However, GRPO's group-based estimator requires multiple fresh rollouts per prompt, provides trajectory-level feedback, and no learning signal when rewards within a group are identical~\citep{yu2026dapo}.
These limitations motivate a method that allows fine-grained token-level feedback while ensuring that the bias stays controlled.

Motivated by these observations, we study \textbf{A}dvantage-\textbf{B}ased \textbf{C}ontrol variate (ABC), an advantage-value formulation of trajectory-level control variates~\citep{cheng2020trajectory} that naturally combines learned advantage and value functions for variance reduction.
We show that its covariance is closely related to Direct Advantage Estimation (DAE) \citep{pan2022direct}, a method that directly estimates the advantage and value functions, and develop a simple actor-critic method that combines both ABC and DAE.
We evaluate our approach in an offline-to-online setting, where the critic is first trained on an offline dataset to approximate advantage and value functions for the target policy, and the actor and critic are subsequently updated using online sampled trajectories.
We show that, despite its simplicity, this combination achieves performance competitive with Dr.~GRPO~\citep{liu2025understanding}, an extension of the original GRPO.
To summarize, our contributions are:
\begin{enumerate}
\item We present ABC, an advantage-value formulation of trajectory-level control variates that incorporates learned advantage and value functions into policy-gradient estimation while preserving unbiasedness.
\item We theoretically characterize its variance and show that the true advantage and value functions minimize its covariance. In deterministic environments, we bound the excess variance in terms of the return prediction error from the estimated advantage and value functions, motivating DAE as the critic objective.
\item We evaluate the combination of ABC and DAE in an offline-to-online RLVR setup, and show that it achieves competitive performance with Dr.~GRPO using as little as 100 policy gradient updates.
\end{enumerate}

\section{Background}\label{sec:background}
We consider a discounted Markov decision process (MDP) defined by $(\gS, \gA, r, p, \gamma)$, where $\gS$ is the state space, $\gA$ is the action space, $r(s,a)$ is the reward function, $p(s'|s,a)$ is the transition probability, and $\gamma\in [0, 1)$ is the discount factor.
We consider the discounted setting here for theoretical considerations, but the results carry over to other settings (e.g., finite horizon) as well.
The policy is denoted by $\pi(a|s)$, the value functions for a given policy $\pi$ are denoted by $V^\pi(s) = \E_\pi\left[\sum_{t=0}^\infty \gamma^t r(s_t, a_t)\mid s_0{=}s\right]$, $Q^\pi(s,a) = \E_\pi\left[\sum_{t=0}^\infty \gamma^t r(s_t, a_t)\mid s_0{=}s, a_0{=}a\right]$, and $A^\pi(s,a) = Q^\pi(s,a) - V^\pi(s)$.
The objective of RL is to optimize $J(\pi)=\E[V^\pi(s_0)]$.

\paragraph{RLVR}
In RLVR, states correspond to the prompts and previously generated tokens, while the actions correspond to the generated tokens.
The rewards are typically rule-based, for example indicating whether a mathematical answer is correct, and often without discounting (i.e., $\gamma=1$).
The environment dynamics are typically deterministic conditional on the selected action, since appending a token uniquely determines the next state.
Finally, we note that RLVR is typically finite-horizon and undiscounted ($\gamma=1$), and we use the infinite-horizon discounted formulation only for the theoretical presentation.

\paragraph{Policy Optimization}
One popular method for optimizing parametrized policies is by estimating the gradient of the expected return directly.
The REINFORCE algorithm does this by,
\begin{equation}
    \nabla J(\theta) = \E_{\pi_\theta}\left[G\sum_{t=0}^\infty \nabla\log\pi_\theta(a_t|s_t)\right],
\end{equation}
where $G=\sum_{t=0}^\infty \gamma^t r(s_t, a_t)$~\citep{williams1992simple}. 
Equivalently, by the policy gradient theorem, we have $\nabla J(\theta) =\E_{(s,a)\sim d^{\pi_\theta}}\left[G\nabla\log\pi_\theta(a|s)\right]=\E_{(s,a)\sim d^{\pi_\theta}}\left[A^\pi(s,a) \nabla\log\pi_\theta(a|s)\right]$, where $d^{\pi_\theta}$ is the (unnormalized) occupancy measure induced by $\pi_\theta$~\citep{sutton1999policy}.
We focus on the on-policy case and omit $d^{\pi_\theta}$ in the following unless otherwise stated.

This approach forms the basis of modern policy optimization algorithms such as PPO or GRPO, where we optimize the clipping objective given by:
\begin{equation}
\mathcal{L}(\theta)=\E_{\pi_{\theta_{\mathrm{old}}}}\left[\min\left(\rho_\theta(s,a)\hat A(s,a),\operatorname{clip}\left(\rho_\theta(s,a),1-\epsilon,1+\epsilon\right)\hat A(s,a)\right)\right],
\end{equation}
where $\epsilon$ is the clipping parameter, $\pi_{\theta_{\mathrm{old}}}$ is the data collecting policy, $\rho_\theta(s,a)=\frac{\pi_\theta(a\mid s)}{\pi_{\theta_{\mathrm{old}}}(a\mid s)}$ is the importance ratio and $\hat A(s,a)$ is an estimate of the advantage function.
The importance ratio permits multiple optimization steps on data collected by $\pi_{\theta_{\mathrm{old}}}$, while clipping limits the changes to the policy.

In practice, the advantage function is unknown, and has to be estimated from sampled trajectories.
PPO achieves this using a combination of TD($\lambda$) and TD-error~\citep{schulman2015high},
while GRPO achieves this by sampling $n$ rollouts from the same initial state (prompt) via
\begin{equation}
\hat A_i = G_i-\hat\mu,\qquad\hat\mu=\frac{1}{n}\sum_{i=1}^n G_i,
\end{equation}
where $G_i$ denote the return of rollout $i$.
Here, $\hat A_i$ is a trajectory-level advantage estimate: every state-action pair along rollout $i$ is assigned the same advantage.
Note that here, unlike the original GRPO formulation, we do not standardize group-relative advantages, as standardization can introduce undesirable bias~\citep{liu2025understanding}.

\paragraph{Control Variate for Policy Gradient}
The original REINFORCE algorithm is known to have high variance.
As such, it is common for practitioners to incorporate the value function into the policy gradient estimator via $\nabla J(\theta) =\E\left[(G-\hat{V}(s))\nabla\log\pi_\theta(a|s)\right]$, where the value function $\hat{V}\approx V^\pi$ acts as a control variate to reduce the variance~\citep{greensmith2004variance}.
This does not bias the gradient as $\E\left[\hat{V}(s) \nabla\log\pi_\theta(a|s)\right]= 0$ for arbitrary $\hat{V}$, and the choice of $\hat{V}$ only affects the variance of the estimator.
While it is common in practice to choose $\hat{V}\approx V^\pi$ as the baseline~\citep{mnih2016asynchronous}, it should be noted that this choice is neither optimal~\citep{greensmith2004variance} nor guaranteed to reduce variance even when $\hat{V}\equiv V^\pi$ (see Appendix~\ref{app:counterexample} for a counterexample).

It is possible to extend this idea beyond just state-dependent control variates.
TrajCV~\citep{cheng2020trajectory} considers trajectory-level control variates, where instead of just using the value of the first state, TrajCV incorporates estimated state-action values $\hat{Q}(s_t,a_t)$ along the trajectory by noticing that $\E[(\hat{Q}(s_t, a_t)-\E[\hat{Q}(s_t, a_t)|s_t])\nabla\log\pi_\theta(a_0|s_0)]=0$ ($t>0$).
This can further explain away part of the variance caused by stochastic actions from the trajectory.

\paragraph{Direct Advantage Estimation (DAE)}
DAE is a method that directly parametrizes and estimates the advantage function from sampled trajectories~\citep{pan2022direct, pan2024skill}.
For deterministic environments, DAE minimizes the following constrained least square objective:
\begin{gather}\label{eqn:dae}
    \gL(\hat{A}, \hat{V}) = \E_\mu\left[\left(\sum_{t=0}^\infty \gamma^t \left(r(s_t, a_t) - \hat{A}(s_t, a_t)\right) - \hat{V}(s_0) \right)^2\right], \\
    \mathrm{subject\; to} \sum_{a\in\gA}\pi(a|s) \hat{A}(s,a)=0 \quad \forall s\in\gS,
\end{gather}
where $\mu$ is the behavior policy.
Under mild assumptions on the coverages of $\mu$ and $\pi$, minimizing this constrained objective gives $ (A^\pi, V^\pi) = \operatorname{argmin} \gL(\hat A, \hat V)$.
DAE is particularly attractive for RLVR as it can learn from off-policy data by simply minimizing a least squares without relying on techniques such as importance sampling, which can cause high variance for long trajectories.
Finally, we note that we do not use bootstrapping and simply optimize the objective as is.

\section{Advantage-Based Control Variate for Policy Gradient}
In this section, we present the ABC estimator, which extends value-based control variate by considering advantages from future state-actions, and show how it allows us to combine estimated advantage and value functions without biasing policy gradients.
We then study its variance structure and show that minimum variance is achieved when the estimates of the advantage and value functions are exact.
Finally, we show that the variance is closely related to the return prediction error from the advantage and value estimates, and draw a connection to DAE.


We first provide some intuitions for the ABC estimator.
For state-based control variates, using the value function can be seen as removing the variance of the weighting (i.e., $\Var(G-V^\pi(s_0)) \le \Var(G)$) that is explained by the initial state.
A sampled trajectory, however, contains more information than just the initial state (e.g., intermediate actions).
\citet{pan2024skill} showed that the return can be decomposed into a sum of advantages along the trajectory, and the advantages capture the variance caused by the intermediate actions.
ABC incorporates this insight into policy gradients to reduce their variance.

\subsection{Advantage-Based Control Variate}

\begin{figure}[t]
\centering
\begin{tikzpicture}[
    x=1cm,y=1cm,
    traj/.style={
        draw,
        rounded corners=5pt,
        align=center,
        text width=2.3cm,
        minimum height=5.5cm,
        fill=gray!7,
        inner sep=6pt
    },
    box/.style={
        draw,
        rounded corners=5pt,
        align=center,
        text width=10.0cm,
        minimum height=1.5cm,
        inner sep=5pt
    },
    reinforce/.style={
        box,
        fill=red!7,
        draw=red!40
    },
    value/.style={
        box,
        fill=blue!7,
        draw=blue!40
    },
    abc/.style={
        box,
        fill=green!8,
        draw=green!45!black,
    },
    arrow/.style={
        -{Latex[length=2mm]},
        thick
    }
]

\node[traj] (traj) at (0,0) {
    \textbf{Sampled}\\
    \textbf{trajectory} $\tau$\\[3mm]

    $(s_0,a_0)$\\
    $\downarrow$\\
    $(s_1,a_1)$\\
    $\downarrow$\\
    $\vdots$\\[3mm]

    Return\\
    $G=\sum_{t=0}^\infty \gamma^t r_t$
};

\node[reinforce] (reinforce) at (7.5, 2.45) {
    \textbf{REINFORCE} (no control variate)
    
    $\displaystyle
    \hat g
    =
    G\,
    \nabla_\theta \log \pi_\theta(a_0|s_0)
    $
};

\node[value] (value) at (7.5,0.45) {
    \textbf{Value baseline} (state-level control variate)
    
    $\displaystyle
    \hat g_V
    =
    \left(G-\hat V(s_0)\right)
    \nabla_\theta \log \pi_\theta(a_0|s_0)
    $
};

\node[abc] (abc) at (7.5,-2.) {
    \textbf{ABC} (trajectory-level control variate)

    $\displaystyle
    \hat g_{\mathrm{ABC}} =
    \left(
    G-\hat V(s_0)
    -\sum_{t=0}^{\infty}\gamma^t \hat A(s_t,a_t)
    \right)
    \nabla_\theta \log \pi_\theta(a_0|s_0)$
    \\
    $\displaystyle
    +
    \sum_a
    \hat A(s_0,a)\,
    \nabla_\theta \pi_\theta(a|s_0)
    $
};

\draw[arrow] (traj.east |- reinforce.west) -- (reinforce.west);
\draw[arrow] (traj.east |- value.west) -- (value.west);
\draw[arrow] (traj.east |- abc.west) -- (abc.west);

\end{tikzpicture}

\caption{
Comparison of unbiased policy-gradient estimators.
REINFORCE uses the sampled return directly, while a value baseline removes
state-dependent variation. ABC additionally uses estimated advantages as a
trajectory-level control variate and adds an analytic correction term to
preserve unbiasedness.
}
\label{fig:abc-intuition}
\end{figure}

We begin with the centering property of the advantage function:
\begin{equation}\label{eqn:centering}
    \sum_a \pi(a|s) A^\pi(s,a) = \sum_a \pi(a|s) Q^\pi(s,a) - V^\pi(s) = 0.
\end{equation}
This centering property has an interesting implication on the policy gradient, namely:
\begin{restatable}{lemma}{orthogonal}\label{lemma:orthogonal}
If $\hat{A}$ satisfies the centering property with respect to $\pi_\theta$, then we have, for $t'\neq t$, 
\begin{equation}
    \E_{\pi_\theta}\left[\hat{A}\left(s_{t'}, a_{t'}\right)\nabla\log\pi_\theta(a_t|s_t)\right] = 0,
\end{equation}
where the expectation is with respect to the policy $\pi_\theta$.
\end{restatable}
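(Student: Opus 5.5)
I split into the two cases $t' > t$ and $t' < t$. In each case I condition on the earlier of the two time steps and use the Markov property of the trajectory distribution under $\pi_\theta$.

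\emph{Case $t' > t$.} I condition on the history $h_{t'} = (s_0,a_0,\dots,s_{t'})$. Then $\nabla\log\pi_\theta(a_t|s_t)$ is $h_{t'}$-measurable. Given $h_{t'}$, the action is $a_{t'}\sim\pi_\theta(\cdot|s_{t'})$. The tower property gives
\begin{equation*}
\E_{\pi_\theta}\left[\hat A(s_{t'},a_{t'})\nabla\log\pi_\theta(a_t|s_t)\right]
=\E_{\pi_\theta}\left[\nabla\log\pi_\theta(a_t|s_t)\sum_{a}\pi_\theta(a|s_{t'})\hat A(s_{t'},a)\right].
\end{equation*}
The inner sum vanishes pointwise by the centering property (\eqref{eqn:centering} applied to $\hat A$). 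This is exactly where the hypothesis is needed.

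\emph{Case $t' < t$.} This is the standard score-function argument. I condition on $h_t = (s_0,a_0,\dots,s_t)$, so that $\hat A(s_{t'},a_{t'})$ is $h_t$-measurable. The remaining factor is
\begin{equation*}
\E\left[\nabla\log\pi_\theta(a_t|s_t)\mid h_t\right]=\sum_a\pi_\theta(a|s_t)\nabla\log\pi_\theta(a|s_t)=\nabla\sum_a\pi_\theta(a|s_t)=\nabla 1=0.
\end{equation*}
This case does not use centering at all; it holds for any function of past state-action pairs.

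The only delicate point is justifying the interchanges: the tower property, and swapping $\nabla$ with the finite sum over actions. I will assume a finite (or suitably integrable) action space and bounded $\hat A$ and score functions, so that all expectations are finite. With those assumptions the step is routine. The main conceptual step is recognizing that the $t'>t$ case requires conditioning at time $t'$ and invoking centering, rather than the usual baseline argument.
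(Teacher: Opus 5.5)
Your proof is correct and is essentially the paper's argument: the same case split, with the centering property killing the $t'>t$ case and the zero-mean score killing the $t'<t$ case via the tower property. The only difference is cosmetic: you condition on the history up to the later state (despite your opening sentence saying ``the earlier of the two time steps''), so the inner expectation is a one-step average over the later action, whereas the paper conditions on the earlier pair $(s_t,a_t)$ or $(s_{t'},a_{t'})$ and leaves the Markov-property/second tower step implicit.
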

See Appendix~\ref{app:proofs} for a proof.
This shows that if an advantage estimate is centered, then it does not bias the gradient from different time-steps.
Intuitively, this is because the advantages only capture the effects from the corresponding actions.
Now, we can combine this with an estimated value function to arrive at the following theorem.
\begin{restatable}[Advantage-Based Control Variate for Policy Gradient]{theorem}{abc}\label{thm:abc}
    Let $\hat{V}:\gS\rightarrow\R$, and $\hat{A}:\gS\times\gA\rightarrow\R$ satisfy the centering property.
    Define the following gradient estimator:
    \begin{equation}        
        \hat{g}_\mathrm{ABC}(\hat{A},\hat{V})=\left(G-\sum_{t=0}^\infty \gamma^t\hat{A}(s_t, a_t) - \hat{V}(s_0)\right)\nabla\log\pi_\theta(a_0|s_0) + \sum_{a}\hat{A}(s_0,a)\nabla\pi_\theta(a|s_0),
    \end{equation}
    where $G$ is the discounted return along $(s_0, a_0, s_1, a_1,\dots)$.
    First of all, $\hat{g}_\mathrm{ABC}$ is an unbiased estimator of the policy gradient.
    Secondly, define $\Delta V=\hat{V} - V^\pi$, $\Delta A = \hat{A} - A^\pi$,
    \begin{equation}
        D=-\left(\Delta V(s_0)+\sum_{t=0}^\infty\gamma^t\Delta A(s_t, a_t)\right)\nabla\log\pi_\theta(a_0|s_0) + \sum_a \Delta A(s_0, a)\nabla\pi_\theta(a|s_0),
    \end{equation}
    and the covariance matrix $\Sigma(\hat{A}, \hat{V})=\mathrm{Cov}(\hat{g}_\mathrm{ABC}(\hat{A},\hat{V}))$, we have
    \begin{equation}\label{eqn:covariance}
        \Sigma(\hat{A}, \hat{V}) = \Sigma(A^\pi, V^\pi) + \E\left[DD^T\right].
    \end{equation}
    Consequently, $(\hat{A},\hat{V})=(A^\pi,V^\pi)$ minimizes the covariance of $\hat{g}_\mathrm{ABC}$ (Loewner order).
\end{restatable}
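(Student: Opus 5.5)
The plan is to write $\hat{g}_\mathrm{ABC}(\hat A,\hat V)=\hat{g}_\mathrm{ABC}(A^\pi,V^\pi)+D$, show that $\E[D]=0$, and then show that $D$ is uncorrelated with $g^\star:=\hat{g}_\mathrm{ABC}(A^\pi,V^\pi)$. The decomposition is immediate from linearity: substituting $\hat V=V^\pi+\Delta V$ and $\hat A=A^\pi+\Delta A$ into the definition of $\hat g_\mathrm{ABC}$ splits it exactly into $g^\star$ plus the expression $D$ in the statement. Note that $\Delta A$ is centered, being a difference of two centered functions.

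\textbf{Unbiasedness.} I will show that $\E[D]=0$ for any centered $\Delta A$ and any $\Delta V$. The term $\Delta V(s_0)\nabla\log\pi_\theta(a_0|s_0)$ has zero mean because $\sum_a\nabla\pi_\theta(a|s_0)=0$. For $t\ge1$, the terms $\gamma^t\Delta A(s_t,a_t)\nabla\log\pi_\theta(a_0|s_0)$ vanish in expectation by Lemma~\ref{lemma:orthogonal}. The $t=0$ term cancels the analytic correction, since $\E_{a_0}[\Delta A(s_0,a_0)\nabla\log\pi_\theta(a_0|s_0)\mid s_0]=\sum_a\Delta A(s_0,a)\nabla\pi_\theta(a|s_0)$. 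Applying the same argument to $(\hat A,\hat V)=(0,0)$ gives $\E[\hat g_\mathrm{ABC}(\hat A,\hat V)]=\E[G\nabla\log\pi_\theta(a_0|s_0)]$, which is the REINFORCE gradient. (I take this to be the gradient with respect to the first-step policy, as the paper's notation suggests; the same argument applies per time step.)

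\textbf{Covariance.} Since both estimators have the same mean, $\Sigma(\hat A,\hat V)=\Sigma(A^\pi,V^\pi)+\E[DD^T]+\E[g^\star D^T]+\E[D g^{\star T}]$. It therefore suffices to show $\E[g^\star D^T]=0$. This is the main obstacle.

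The key simplification uses the return decomposition of \citet{pan2024skill}. In the deterministic setting,
\[
G-V^\pi(s_0)-\sum_t\gamma^tA^\pi(s_t,a_t)=0,
\]
so $g^\star=\sum_aA^\pi(s_0,a)\nabla\pi_\theta(a|s_0)$ is a function of $s_0$ alone. For stochastic transitions, the residual is instead a sum of discounted terms $\gamma^{t+1}\bigl(V^\pi(s_{t+1})-\E[V^\pi(s_{t+1})\mid s_t,a_t]\bigr)$, each with zero conditional mean given the history up to $(s_t,a_t)$. I will handle the general case by conditioning.

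Write $g^\star=h(s_0)+\epsilon\,\psi_0$, where $\psi_0=\nabla\log\pi_\theta(a_0|s_0)$ and $\epsilon$ is the martingale residual above. Write
\[
D=-\Bigl(\Delta V(s_0)+\sum_t\gamma^t\Delta A(s_t,a_t)\Bigr)\psi_0+k(s_0).
\]
The cross terms are then handled as follows.
\begin{itemize}
\item $\E[h(s_0)D^T]=0$, since $\E[D\mid s_0]=0$ by the unbiasedness computation carried out conditionally on $s_0$.
\item $\E[\epsilon\,\psi_0\,k(s_0)^T]$ and $\E[\epsilon\,\psi_0\psi_0^T\Delta V(s_0)]$ vanish because $\E[\epsilon\mid s_0,a_0]=0$.
\item For the terms $\E[\epsilon\,\psi_0\psi_0^T\Delta A(s_t,a_t)]$, I split $\epsilon$ into its increments. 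Increments at times $\ge t$ have zero mean conditional on the history through $(s_t,a_t)$, so they vanish. Increments at times $<t$ are measurable before $a_t$ is drawn, and $\E[\Delta A(s_t,a_t)\mid s_t]=0$ by centering, so these vanish as well.
\end{itemize}
This gives $\E[g^\star D^T]=0$ and hence equation~\ref{eqn:covariance}. Finally, since $\E[DD^T]\succeq0$, the pair $(A^\pi,V^\pi)$ minimizes the covariance in the Loewner order.
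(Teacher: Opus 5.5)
Your proposal is correct and follows essentially the same route as the paper. It writes $\hat g_\mathrm{ABC}(\hat A,\hat V)=\hat g_\mathrm{ABC}(A^\pi,V^\pi)+D$ and uses the $B^\pi$ return decomposition to express $\hat g_\mathrm{ABC}(A^\pi,V^\pi)$ as an $s_0$-measurable term plus $\sum_t\gamma^tB^\pi(s_t,a_t,s_{t+1})\nabla\log\pi_\theta(a_0|s_0)$. It then kills the cross terms by conditioning on the history through $(s_t,a_t)$ for $B^\pi$ increments at or after the $\Delta A$ time index, and on the history through that $\Delta A$ index's state (using centering) for earlier increments. Your explicit check that $\E[D\mid s_0]=0$ fills in a step the paper only asserts when it says the $s_0$-measurable term is uncorrelated with $D$.
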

See Appendix~\ref{app:proofs} for a proof.
We note that the value-baseline estimator is recovered as a special case of ABC by setting $\hat{A}\equiv 0$.
Finally, Theorem~\ref{thm:abc} shows that using the exact functions $(A^\pi,V^\pi)$ yields covariance no worse than any value-baseline, including REINFORCE without a baseline.

\paragraph{Relationship to Rao-Blackwellization}
Readers familiar with the skill-luck decomposition~\citep{pan2024skill} may notice that the weighting of the score function $G-\sum_{t=0}^\infty \gamma^tA^\pi(s_t, a_t) - V^\pi(s_0)$ is precisely the cumulative effect from the environment stochasticity, $\sum_{t=0}^\infty \gamma^t B^\pi(s_t, a_t, s_{t+1})$, where $B^\pi(s_t, a_t, s_{t+1})=r_t + \gamma V^\pi(s_{t+1}) - Q^\pi(s_t, a_t)$ captures how the stochastic transition from $(s_t, a_t)$ to $s_{t+1}$ affects the return.
As such, the estimator can be seen as separating environment noise (first term) from action effects (second term, integrated over the action space).
Now, if we consider deterministic environments, where $B^\pi\equiv 0$, such that the following equality holds $G=V^\pi(s_0) + \sum_{t=0}^\infty \gamma^t A^\pi(s_t, a_t)$, then the first term (score function) of the ABC estimator vanishes and the estimator reduces to
\begin{equation*}
    \hat{g}_\mathrm{ABC}(A^{\pi_\theta}, V^{\pi_\theta}) = \sum_a A^{\pi_\theta}(s_0, a)\nabla\pi_\theta(a|s_0) = \E\left[A^{\pi_\theta}(s_0, a)\nabla\log\pi_{\theta}(a|s_0)\mid s_0\right],
\end{equation*}
which is precisely the policy gradient theorem conditioned on the state~\citep{sutton1999policy}.
As such, the ABC estimator (given $A^\pi$ and $V^\pi$) can also be understood as Rao-Blackwellizing the policy gradient estimator $A^{\pi_\theta}(s,a)\nabla\log\pi_{\pi_\theta}(a|s)$~\citep{10.1214/aoms/1177730497}.

Finally, we note that ABC is algebraically equivalent to TrajCV~\citep{cheng2020trajectory} under the reparameterization $\hat Q(s,a)=\hat A(s,a)+\hat V(s)$, where the centering constraint implies $\mathbb{E}_{a\sim\pi(\cdot\mid s)}[\hat Q(s,a)]=\hat V(s)$.
However, our advantage-value formulation makes the multi-step return prediction from $(\hat{A},\hat{V})$ explicit, motivating the connection to DAE developed next.

\subsection{Learning Control Variates}
In practice, we often only have estimates of $(A^\pi, V^\pi)$, and it is not immediately clear how much we suffer from the approximations.
Here, we show that this can be bounded by the error we make in predicting the return $G$.
\begin{restatable}{corollary}{errorbound}\label{cor:errorbound}
For deterministic MDPs, if $(\hat{A}, \hat{V})$ satisfies $|G - \sum_{t=0}^\infty \gamma^t \hat{A}(s_t, a_t) - \hat{V}(s_0)|<\epsilon$, then
\begin{equation*}
    \Sigma(\hat{A}, \hat{V}) - \Sigma(A^\pi, V^\pi) \leq \epsilon^2 F,
\end{equation*}
where $F=\E[\nabla\log\pi_\theta(a_0|s_0)\nabla\log\pi_\theta(a_0|s_0)^T]$ is the Fisher information matrix, and $\leq$ follows the Loewner order.
\end{restatable}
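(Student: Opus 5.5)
The approach is to start from the covariance decomposition \eqref{eqn:covariance} in Theorem~\ref{thm:abc}, which already gives $\Sigma(\hat A,\hat V)-\Sigma(A^\pi,V^\pi)=\E[DD^T]$. It then suffices to show $\E[DD^T]\le \epsilon^2 F$ in the Loewner order. Throughout, write $\psi=\nabla\log\pi_\theta(a_0|s_0)$ and $e=G-\sum_{t}\gamma^t\hat A(s_t,a_t)-\hat V(s_0)$ for the return prediction error.

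\textbf{Step 1: rewrite the first part of $D$.} In a deterministic MDP the skill--luck decomposition holds exactly, $G=V^\pi(s_0)+\sum_t\gamma^tA^\pi(s_t,a_t)$. Substituting this gives $e=-(\Delta V(s_0)+\sum_t\gamma^t\Delta A(s_t,a_t))$. Hence the first term of $D$ is exactly $e\,\psi$, and
\[
D=e\,\psi+\sum_a\Delta A(s_0,a)\nabla\pi_\theta(a|s_0)=e\,\psi+\E[\Delta A(s_0,a_0)\psi\mid s_0].
\]

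\textbf{Step 2 (the key step): identify the correction term as a conditional mean.} I will show that $\E[e\psi\mid s_0]=-\E[\Delta A(s_0,a_0)\psi\mid s_0]$, so that $D=e\psi-\E[e\psi\mid s_0]$ is a conditionally centered quantity. Expanding $e$ gives three kinds of terms:
\begin{itemize}
\item The $\Delta V(s_0)$ term vanishes, since $\E[\psi\mid s_0]=0$.
\item The $t=0$ advantage term yields exactly $-\E[\Delta A(s_0,a_0)\psi\mid s_0]$.
\item Each $t\ge1$ term, $\E[\Delta A(s_t,a_t)\psi\mid s_0]$, vanishes. Both $\hat A$ and $A^\pi$ are centered, so $\Delta A$ is centered. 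Conditioning on $(s_0,a_0,\dots,s_t)$ makes $\psi$ measurable, while $\E[\Delta A(s_t,a_t)\mid s_t]=0$. This is the argument of Lemma~\ref{lemma:orthogonal}, run conditionally on $s_0$.
\end{itemize}
Exchanging the infinite sum with the expectation needs boundedness and discounting. I will assume the mild integrability conditions implicit in Theorem~\ref{thm:abc}.

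\textbf{Step 3: conclude.} Since $D=e\psi-\E[e\psi\mid s_0]$, we have
\[
\E[DD^T]=\E[\mathrm{Cov}(e\psi\mid s_0)]\le \E[e^2\psi\psi^T]\le \epsilon^2\E[\psi\psi^T]=\epsilon^2F.
\]
The first inequality holds because a conditional covariance is dominated by the conditional second moment. The second uses $e^2<\epsilon^2$ pointwise together with $\psi\psi^T\succeq 0$.

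The main obstacle is Step~2: without recognizing the analytic correction term as the conditional mean of $e\psi$, the cross term between $e\psi$ and $\sum_a\Delta A\nabla\pi_\theta$ appears to need separate control. Once the centering argument makes $D$ conditionally centered, the bound is immediate.
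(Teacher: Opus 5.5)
Your proposal is correct and follows the paper's proof. Both rewrite $D=e\,\psi-\E[e\,\psi\mid s_0]$, then drop the positive semidefinite term $\E\bigl[\E[e\psi\mid s_0]\E[e\psi\mid s_0]^T\bigr]$ to get $\E[DD^T]\preceq\E[e^2\psi\psi^T]\preceq\epsilon^2F$; your Step~2 supplies the deterministic-MDP and centering justification for that rewriting, which the paper states without detail.
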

See Appendix~\ref{app:proofs} for a proof.
This suggests that, to reduce the variance of policy gradients, we should try to minimize the error $|G - \sum_{t=0}^\infty \gamma^t \hat{A}(s_t, a_t) - \hat{V}(s_0)|$.
Notably, the error appearing in this bound is similar to the quantity minimized by the DAE objective (cf. Equation~\ref{eqn:dae}).
This motivates our practical actor-critic algorithm, which iteratively updates the actor with the ABC gradient estimate and the critic with DAE.
In the next section, we empirically verify the performance of this combination.

\section{Experiments}\label{sec:exp}
In this section, we examine the effectiveness of combining the ABC estimator and DAE~\citep{pan2022direct, pan2024skill} in RLVR.
We consider an offline-to-online actor-critic setting, where an offline dataset of rollouts is available.
This allows the critic LLM to learn useful representations and outputs before online training of the actor, which is particularly useful as value and advantage prediction differ substantially from next-token prediction.
Figure~\ref{fig:experiment-overview} summarizes the overall training pipeline.
Next, we detail the experimental design.

\begin{figure}[t]
    \centering
    \begin{tikzpicture}[
        box/.style={
            draw,
            rounded corners=6pt,
            align=center,
            text width=3.5cm,
            minimum height=2.0cm,
            inner sep=7pt,
            thick
        },
        arrow/.style={
            -{Latex[length=2mm]},
            thick
        },
        offline/.style={
            box,
            fill=blue!8
        },
        critic/.style={
            box,
            fill=orange!10
        },
        online/.style={
            box,
            fill=green!10
        }
    ]

    \node[offline] (collect) at (0,0) {
        \textbf{Trajectory Collection}\\[2mm]
        DeepMath prompts\\
        Qwen3 rollouts\\
        \texttt{math-verify}
    };

    \node[critic] (critic) at (4.7,0) {
        \textbf{Offline Critic Training}\\[2mm]
        DAE on offline trajectories\\
        Learn $A_\phi$ and $V_\phi$
    };

    \node[online] (online) at (9.4,0) {
        \textbf{Online Actor--Critic Training}\\[2mm]
        On-policy rollouts\\
        ABC actor updates\\
        DAE critic updates
    };

    \draw[->] (collect) -- (critic);
    \draw[->] (critic.east) -- (online.west);

    \node[font=\small\bfseries, text=gray!70!black]
        at (2.3,2.5) {Offline Phase};

    \node[font=\small\bfseries, text=gray!70!black]
        at (9.4,2.5) {Online Phase};

    \draw[dashed, gray]
        (7.1,-1.35) -- (7.1,1.35);

    \end{tikzpicture}

    \caption{
    Overview of the offline-to-online training pipeline.
    We first collect trajectories with verifiable rewards and use DAE to
    pretrain the critic. During online training, trajectories sampled from the
    current actor are used both for ABC policy-gradient updates and continued
    DAE critic training.
    }
    \label{fig:experiment-overview}
\end{figure}

\paragraph{Model Class}
We consider the Qwen3 family of models, which includes models of various sizes that share the same tokenizer and hence the same action space.
This allows us to train asymmetric actor-critic pairs and control for the sizes of the critic independently.
For the following experiments, both actors and critics are fine-tuned from base models using LoRA~\citep{hu2021lora}.
We test our method on actors of sizes \{0.6/1.7/4\}B, while fixing the size of the critic to 8B.

\paragraph{Math Dataset}
We use the DeepMath-103k dataset~\citep{he2026deepmath}, a large dataset of challenging math question-answer pairs spanning a wide range of topics.
This dataset is used for both offline dataset generation and for online training.
For evaluation, we consider the standard benchmark datasets  MATH500~\citep{lightman2023lets}, AMC23~\citep{he2024amc23}, AIME25~\citep{aime25}, and AIME26~\citep{aime26}.
We use \texttt{math-verify} for answer extraction and verification\footnote{\url{https://github.com/huggingface/math-verify}}.
Finally, we do not use additional templates for prompting, following the approach by \citet{liu2025understanding}.

\paragraph{Offline Critic Training}
The offline dataset consists of trajectories generated by the Qwen3-\{0.6/1.7/4/8/14\}B base models.
For each question in DeepMath-103k dataset, we collect 16 rollouts from each model.
This results in a dataset of 103k (questions) $\times$ 16 (rollouts) $\times$ 5 (base models) $\approx$ 8.2 million trajectories.
We then use the DAE objective (Equation~\ref{eqn:dae}) to fine-tune the critic model.
For fine-tuning, we use LoRA~\citep{hu2021lora} for intermediate layers, and add separate value and advantage heads on top of the base model to parametrize the advantage and value functions, similar in spirit to the dueling architecture~\citep{wang2016dueling}.

\paragraph{Online Actor-Critic Training}
Algorithm~\ref{alg:abc} summarizes the online actor-critic training.
In contrast to many modern policy optimization methods, we strive to keep the algorithm simple and \emph{do not} use clipping, importance sampling (actor update uses fully on-policy data) or additional regularizations (e.g., KL or entropy regularization).
At each iteration, the actor is updated with exactly one gradient step, avoiding the need for off-policy corrections, while the critic is updated for $K$ steps.
We train the actor for exactly 100 gradient steps (8,192 trajectories per step, ${\sim}0.8M$ trajectories in total).

It should be noted that there is a lag between the critic and the actor: the critic learns from trajectories collected before the actor update, but the target policy is, in fact, the updated actor.
This is mitigated by the off-policy nature of DAE, which can estimate the value and advantage functions of a target policy using data collected by a different behavior policy without additional corrections.

\begin{algorithm}[t]
\caption{Online Actor-Critic Training with ABC and DAE}
\label{alg:abc}
\begin{algorithmic}[1]
\Require Problem dataset $\mathcal{D}$, Actor $\pi_\theta$, Critic $(A_\phi,V_\phi)$, Batch size $N$, Num. critic update $K$

\While{not converged}
    \State Sample a batch of prompts $\{x_i\}_{i=1,\dots,N}\sim \gD$
    \State Sample trajectories $\{\tau_i\}_{i=1,\dots,N}$ for each prompt using $\pi_\theta$
    \State Compute $(A_\phi, V_\phi)$ for all states(-actions) in $\{\tau_i\}_{i=1,\dots,N}$ (centered by $\pi_\theta$)
    \State Compute the ABC policy-gradient estimator ($|\tau|$ denotes the trajectory length)
    \[
        \hat{g}=\frac{1}{N}\sum_{i=1}^N\sum_{t=0}^{|\tau_i|-1}\hat g_{\mathrm{ABC}}(A_\phi,V_\phi)(s_t, a_t)
    \]
    \State Update the actor parameters $\theta$ with gradient $\hat{g}$

    \For{$k=1,\ldots,K$}
        \State Sample a minibatch $\mathcal{B} \sim \{\tau_i\}_{i=1,\dots,N}$
        \State Optimize the DAE critic loss $\mathcal{L}_{\mathrm{DAE}}(\phi;\mathcal{B})$ with respect to policy $\pi_\theta$

    \EndFor
\EndWhile

\end{algorithmic}
\end{algorithm}

\subsection{Results}

\paragraph{Offline Advantage \& Value Learning}
We first consider the offline training phase.
To evaluate the quality of the estimated advantage and value functions, we split the offline dataset into training and validation sets (10000 trajectories for validation).
We note that this phase is simply supervised learning with a mean squared error (MSE) objective (Equation~\ref{eqn:dae}).
To further assess the quality of the learned advantage, we compare the MSE with or without the advantages.
Ideally, adding the advantages should allow the model to better approximate the return and reduce the MSE.
Figure~\ref{fig:offline_training} shows the offline training loss curves.
Including the advantage term substantially reduces held-out MSE, indicating that the learned advantages capture the part of the return that is not explained by the value function alone.
Interestingly, we see the gap between the MSE (with or without the advantages) decreases with actor size.
This is likely due to larger actors having lower entropy policies (note that the gap reflects how much the variance can be explained by the actions, and the more deterministic the policy is, the less the variance can be explained by the actions), and we see a similar effect over the course of online training (see Figure~\ref{fig:online_critic} and Appendix~\ref{app:exp} for additional discussion).

\begin{figure}[t]
    \centering
    \noindent
    \begin{minipage}{0.48\textwidth}
        \centering
        \includegraphics[width=\linewidth]{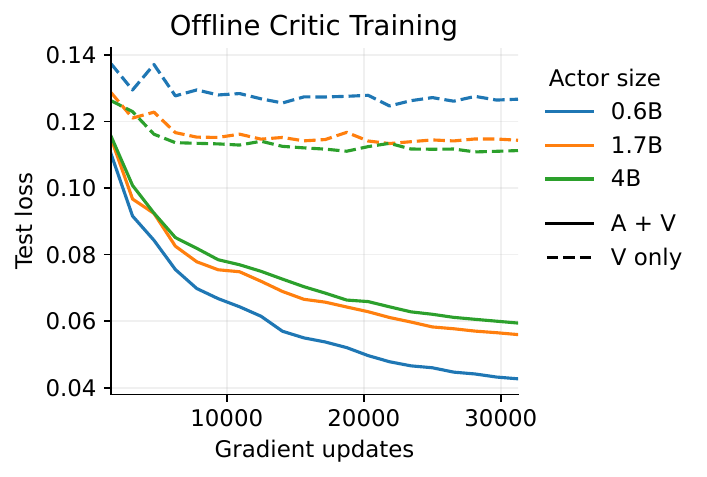}
        \caption{Offline critic training loss.}
        \label{fig:offline_training}
    \end{minipage}
    \hfill
    \begin{minipage}{0.48\textwidth}
        \centering
        \includegraphics[width=\linewidth]{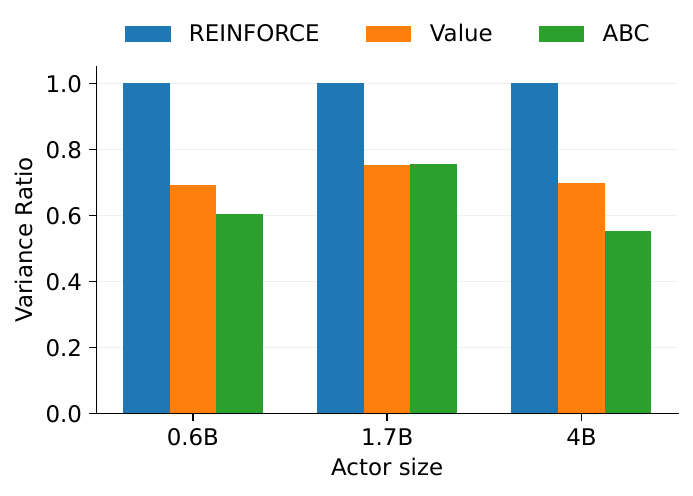}
        \caption{Variance ratio between estimators. REINFORCE $=1$.}
        \label{fig:variance}
    \end{minipage}
\end{figure}

\paragraph{Variance Reduction}
We now compare the variance of the policy gradients between different methods.
We estimate the first gradient step variance by computing per-sample policy gradient over 4096 sample trajectories with respect to each method (REINFORCE, Value baseline, and ABC), and consider the sum of the variances across parameters (i.e., trace of the estimated covariance).
Figure~\ref{fig:variance} shows the ratio of the sum of variances across estimators.
Across model sizes, we see that using value baselines already reduces the variance by approximately $30\%$.
We also see that, aside from the 1.7B model, adding advantages (ABC) further reduces the variance compared to the value baseline.
For the 1.7B model, we see a few outlier states with extreme critic prediction error, which likely explains why the variance is not reduced (see Appendix~\ref{app:exp} for additional discussion).

\begin{figure}[t]
    \centering
    \includegraphics[width=0.94\linewidth]{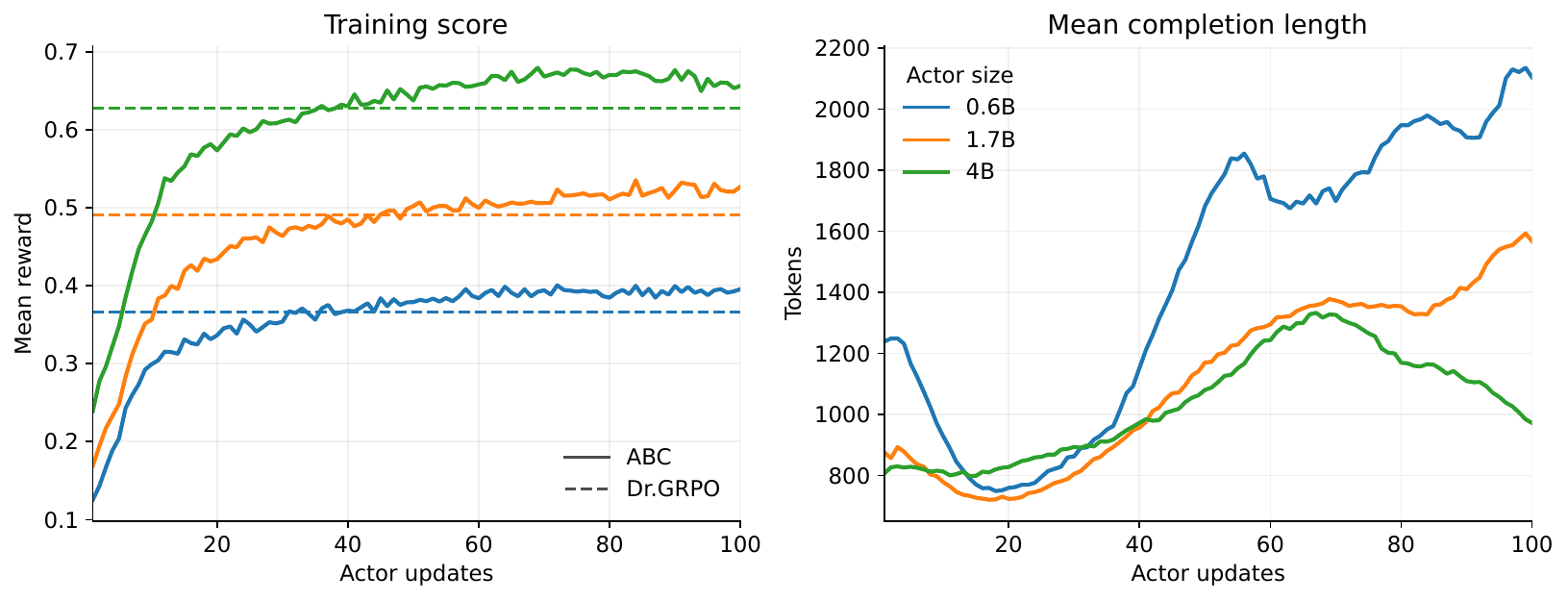}
    \caption{ABC training curves.}
    \label{fig:abc_curves}
\end{figure}
\paragraph{Policy Optimization Performance}

To demonstrate the effectiveness of our method, we compare with Dr.~GRPO~\citep{ liu2025understanding}, an extension of the original GRPO~\citep{shao2024deepseekmath} that fixes the bias from advantage standardization and length normalization.
The Dr.~GRPO baseline is trained with 1 epoch over the training set, corresponding to approximately 6000 gradient steps, or 1.6M generated trajectories (see Appendix~\ref{app:exp} for hyperparameters).
Table~\ref{tab:eval-results} shows that our method (ABC) achieves performance competitive with the Dr.~GRPO baseline on the evaluation datasets.
Figure~\ref{fig:abc_curves} shows the learning curves of ABC, and we find that ABC achieves similar training performance to Dr.~GRPO using only about 50 gradients (each actor update corresponds to exactly one policy gradient).
In addition, we observe similar increases in completion length under ABC despite using an unbiased policy-gradient estimator, suggesting that this phenomenon is not solely due to biases specific to GRPO~\citep{liu2025understanding}.
Finally, we find that the variance reduction is crucial, as REINFORCE is not able to achieve similar performance as Dr.~GRPO (see Figure~\ref{fig:reinforce_curves}).

\begin{table}[t]
\centering
\begin{tabular}{cc*{4}{r@{\,/\,}l}}
\toprule
Model size & Method & \multicolumn{2}{c}{MATH500} & \multicolumn{2}{c}{AMC23} & \multicolumn{2}{c}{AIME25} & \multicolumn{2}{c}{AIME26} \\
\hline
\multirow{3}{*}{0.6B}  &  Dr. GRPO & 47.4 & 80.4 & 25.9 & 67.5 & \textbf{1.4} & 13.3 & \textbf{0.8} & \textbf{10.0} \\
  &  REINFORCE & 29.5 & 72.2 & 15.3 & 65.0 & 0.5 & 6.7 & 0.1 & 3.3 \\
  &  ABC & \textbf{53.2} & \textbf{83.6} & \textbf{29.3} & \textbf{75.0} & 0.6 & \textbf{16.7} & 0.7 & \textbf{10.0} \\ \hline
\multirow{3}{*}{1.7B}  &  Dr. GRPO & 67.1 & 90.0 & 39.5 & \textbf{85.0} & 3.5 & 20.0 & \textbf{3.5} & 16.7 \\
  &  REINFORCE & 62.3 & 89.6 & 36.7 & 82.5 & 2.4 & 13.3 & 2.4 & 16.7 \\
  &  ABC & \textbf{68.8} & \textbf{92.2} & \textbf{40.7} & 82.5 & \textbf{4.8} & \textbf{23.3} & 3.1 & \textbf{20.0} \\ \hline
\multirow{3}{*}{4B}  &  Dr. GRPO & \textbf{80.3} & \textbf{95.0} & \textbf{59.1} & \textbf{90.0} & 10.6 & \textbf{43.3} & 10.7 & \textbf{43.3} \\
  &  REINFORCE & 75.5 & 94.6 & 52.3 & 87.5 & 8.0 & 40.0 & 7.6 & 33.3 \\
  &  ABC & 78.5 & 92.8 & \textbf{59.1} & 82.5 & \textbf{13.1} & 40.0 & \textbf{10.9} & 30.0 \\
\bottomrule
\end{tabular}
\caption{Final-model evaluation scores in percent, reported as (mean/pass)@32. Mean is the average reward over 32 sampled completions per problem; pass is the percentage of problems with at least one correct completion.}
\label{tab:eval-results}
\end{table}

\vspace{1cm}
\begin{wrapfigure}{r}{.45\linewidth}
    \centering
    \includegraphics[width=.95\linewidth]{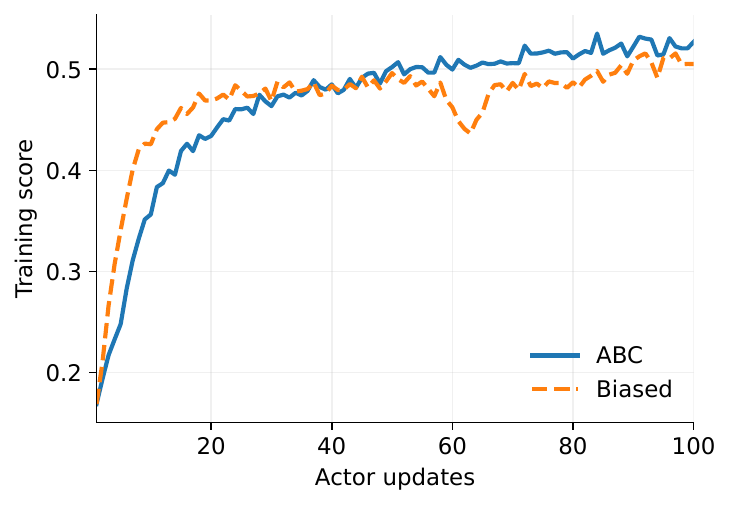}
    \caption{Comparison with biased gradients.}
    \label{fig:biased}
\end{wrapfigure}
\paragraph{Comparison with Biased Estimator}
As DAE learns an approximation of the advantage function, one may wonder if we can use this directly in the policy gradient via $\hat{A}(s,a)\nabla\log\pi_\theta(a|s)$.
Unlike the unbiased ABC estimator, the performance of this approach depends heavily on the critic, as the actor update depends on the environment reward only through the learned critic.
Figure~\ref{fig:biased} compares the training score between ABC and this biased approach.
We see that at initialization, the biased approach actually outperforms ABC, which suggests that the offline training is, in fact, effective in learning the advantage function.
However, as learning proceeds, the biased approach slows down while ABC steadily improves.
This suggests that critic bias can limit performance even when the initial advantage estimates are informative.

\section{Related Work}

\paragraph{RLVR}
RLVR has recently emerged as an effective approach for improving the reasoning capabilities of LLMs, particularly for domains where rewards can be verified automatically.
A prominent family of methods builds on Group Relative Policy Optimization (GRPO)~\citep{shao2024deepseekmath}, which removes the learned critic used in PPO and instead estimates advantages by comparing multiple rollouts generated from the same prompt.
GRPO has been used extensively in large-scale reasoning models such as DeepSeek-R1~\citep{guo2025deepseek}, and has motivated a number of subsequent methods aimed at improving its stability and efficiency~\citep{yu2026dapo, liu2025understanding, piche2025pipelinerl, yue2025vapo, li2025repo}.
In contrast to critic-free approaches, our method revisits the use of learned advantages, while avoiding the need to estimate them solely from online MC rollouts.

\paragraph{Advantage Estimation and Variance Reduction}
Reducing the variance of policy-gradient estimators has been studied extensively, beginning with state-dependent baselines and actor-critic methods~\citep{williams1992simple,sutton1999policy}, and later through methods such as Generalized Advantage Estimation (GAE)~\citep{schulman2015high}.
More general control-variate constructions have also been proposed.
For example, Q-Prop uses an off-policy critic as a control variate for policy gradients~\citep{gu2016q}, while \citet{liu2017action} propose action-dependent control variates based on Stein's identity.
Trajectory-level information has also been used for variance reduction; for example, \citet{mesnard2020counterfactual} introduce future-conditional value functions that exploit future information while preserving unbiasedness.
TrajCV~\citep{cheng2020trajectory} similarly studied trajectory-level control variates using estimated $Q$-values, and ABC can be viewed as an advantage-value reparameterization, that naturally connects to DAE.
Direct Advantage Estimation (DAE) has been shown to improve policy optimization~\citep{pan2022direct,pan2024skill,pan2026direct} and reduce the variance of policy evaluation~\citep{pan2026variance}.
In contrast to prior works on DAE, we study how estimates of the advantage function can be used as control variates to reduce policy-gradient variance while preserving an unbiased gradient estimator, which is particularly relevant in light of the strong empirical performance of critic-free RLVR methods.

\paragraph{On-policy Distillation} 
Recent on-policy distillation (OPD) methods also exploit dense teacher feedback on tokens visited by the current policy~\citep{agarwal2024policy}.
In full-vocabulary OPD, objectives such as reverse KL take the form $\sum_a w(s,a)\nabla\pi_\theta(a|s)$, which closely resembles the analytic advantage term in ABC, $\sum_a \hat A(s,a)\nabla_\theta\pi_\theta(a|s)$.
The two methods differ mainly in the goal of the objectives: OPD uses teacher log-probabilities to move the student toward the teacher policy, whereas ABC uses estimated advantages as a control variate and combines the analytic term with the residual term to preserve an unbiased gradient of the original RL objective.

\section{Discussion}

Recent RLVR methods have shown that critic-free approaches can perform strongly despite potentially higher variance, raising the possibility that bias introduced through approximate advantage estimation from the critic can be more harmful than the variance they are intended to reduce.
The ABC estimator provides a natural way to integrate estimated advantages and values from DAE into the policy gradient estimator without introducing bias.
Furthermore, our theoretical analysis demonstrates that the variance of the ABC estimator is closely related to the objective function of DAE, which motivates a natural combination of both algorithms.

Empirically, our offline-to-online RL results suggest that this combination can make critic-based optimization competitive with modern critic-free RLVR methods.
This may be especially attractive when large amounts of rollout data already exist, since critic learning can partially be moved to the offline phase rather than starting from scratch during online RL.
Furthermore, critic training can in principle reuse off-policy data and proceed independently of rollout collection.
Combined with the fact that ABC does not rely on group-based advantage estimates, this decoupling may make the approach more amenable to asynchronous RL settings~\citep{piche2025pipelinerl,noukhovitch2025asynchronous}.

\paragraph{Limitations}
First, our empirical evaluation focuses on mathematical problems, and it remains to be seen how well ABC transfers to other RL domains and to settings with stochastic environment dynamics.
Second, the quality of the control variate depends on the learned critic.
Although approximation errors do not bias the ABC estimator, poor estimates may provide little or even negative variance reduction.
Third, our current experiments focus primarily on optimization performance rather than end-to-end computational efficiency.
Training and maintaining a critic incurs additional memory and computation costs, and a complete systems-level comparison with critic-free methods is an important direction for future work.
Finally, understanding how the distribution of the offline data affects the critic is another important direction for future study.

\subsubsection*{Acknowledgments}
HRP would like to thank Zeju Qiu, Le Chen, Luis Bauer, Simon Guist, Jan Schneider, and Max Mordig for (virtually) supporting this project.

\bibliography{ref}
\bibliographystyle{plain}

\appendix
\section{Baseline Counterexample}\label{app:counterexample}
Here, we give a concrete example showing that using $V^\pi$ as the baseline function may increase the variance of policy gradients.

Consider a two-armed bandit problem with action space ($\mathcal A=\{0,1\}$) and rewards
$r(0)=0$, $r(1)=1$.
We parameterize the policy as
\begin{equation}
\pi_\theta(a) =
\begin{cases}
\frac{1}{1 + e^{\theta}},\qquad a=0, \\
\frac{e^{\theta}}{1 + e^{\theta}},\qquad a=1.
\end{cases}
\end{equation}

The score function is then
\begin{equation}
\nabla_\theta \log \pi_\theta(a)
= a - \pi_\theta(1),
\end{equation}
and the exact value function is
\begin{equation}
    V^{\pi_\theta} = \pi_\theta(1).
\end{equation}
Denote $\pi_\theta(1)$ by $p$, 
then the REINFORCE estimator is equal to
\begin{equation}
    g = (a - p) r(a),
\end{equation}
and 
\begin{equation}
    \E[g] = p(1-p),\quad \Var(g)= p(1-p)^3. 
\end{equation}
Now, the baseline estimator with exact value function is equal to
\begin{equation}
    g_b = (a - p) \left(r(a) - p\right) = (a-p)^2,
\end{equation}
and
\begin{equation}
    \Var(g_b) = p(1-p)(1-2p)^2.
\end{equation}
As such, when $(1-2p)^2 > (1-p)^2$ (e.g., $p>\frac{2}{3}$), we have $\Var(g_b)>\Var(g)$.

\section{Proofs}\label{app:proofs}
\orthogonal*

\begin{proof}
    If $t'>t$, then
    \begin{align*}
    \E\left[\hat A\left(s_{t'}, a_{t'}\right)\nabla\log\pi_\theta(a_t|s_t)\right] & = \E\left[\E\left[\hat{A}\left(s_{t'}, a_{t'}\right)\nabla\log\pi_\theta(a_t|s_t)\mid s_t, a_t\right]\right] \\       
    & = \E\left[\E\left[\hat{A}\left(s_{t'}, a_{t'}\right)\mid s_t, a_t\right]\nabla\log\pi_\theta(a_t|s_t)\right]=0,
    \end{align*}
    where the last equation follows from the centering property.
    If $t'<t$, then
    \begin{align*}
    \E\left[\hat{A}\left(s_{t'}, a_{t'}\right)\nabla\log\pi_\theta(a_t|s_t)\right] & = \E\left[\E\left[\hat{A}\left(s_{t'}, a_{t'}\right)\nabla\log\pi_\theta(a_t|s_t)\mid s_{t'}, a_{t'}\right]\right] \\       
    & = \E\left[\hat{A}\left(s_{t'}, a_{t'}\right)\E\left[\nabla\log\pi_\theta(a_t|s_t)\mid s_{t'}, a_{t'}\right]\right]=0,
    \end{align*}
\end{proof}

\abc*
\begin{proof}
    We begin by proving that the estimator is unbiased.
    \begin{align}
        \E[\hat{g}_\mathrm{ABC}(\hat{A}, \hat{V})]=&\E[G\nabla\log\pi_\theta(a_0| s_0)]-\sum_{t=0}^\infty\gamma^t\E[\hat{A}(s_t, a_t)\nabla\log\pi_\theta(a_0| s_0)]\\
        &-\E[\hat{V}(s_0)\nabla\log\pi_\theta(a_0| s_0)]+\E[\sum_a \hat{A}(s_0, a) \nabla\pi_\theta(a| s_0)]
    \end{align}
    By Lemma~\ref{lemma:orthogonal}, $\E[\hat{A}(s_t, a_t)\nabla\log\pi_\theta(a_0| s_0)]=0$ for $t>0$.
    Next, use the log derivative trick, we have
    \begin{equation*}
        \E[\sum_a \hat{A}(s_0, a) \nabla\pi_\theta(a| s_0)]=\E[\hat{A}(s_0, a_0) \nabla\log\pi_\theta(a_0| s_0)]
    \end{equation*}
    Consequently,
    \begin{align}
        \E[\hat{g}_\mathrm{ABC}(\hat{A}, \hat{V})]=&\E[G\nabla\log\pi_\theta(a_0| s_0)].
    \end{align}
    This concludes the first part of the proof.
    Next, we consider the covariance of the estimator.
    To simplify the notation, we first denote $\hat{g}_\mathrm{ABC}^\pi=\hat{g}_\mathrm{ABC}(A^\pi, V^\pi)$.
    It follows that
    \begin{equation}
        \hat{g}_\mathrm{ABC}(\hat{A}, \hat{V}) = \hat{g}_\mathrm{ABC}^\pi + D. 
    \end{equation}
    The covariance follows
    \begin{equation}
        \Sigma(\hat{A}, \hat{V}) = \Sigma(A^\pi, V^\pi) + \E[DD^T] + \mathrm{Cov}(\hat{g}_\mathrm{ABC}^\pi, D) + \mathrm{Cov}(D, \hat{g}_\mathrm{ABC}^\pi),
    \end{equation}
    where $\mathrm{Cov}(D)=\E[DD^T]$ as $\E[D]=0$ (unbiasedness).
    Consequently, we only need to show that
    \begin{equation}
        \mathrm{Cov}(\hat{g}_\mathrm{ABC}^\pi, D) + \mathrm{Cov}(D, \hat{g}_\mathrm{ABC}^\pi) = 0
    \end{equation}

    We first decompose the return of a trajectory into:
    \begin{equation}
        G=V^\pi(s_0) + \sum_{t=0}^\infty \gamma^t \left(A^\pi(s_t, a_t) + B^\pi(s_t, a_t, s_{t+1})\right),   
    \end{equation}
    where $B^\pi(s_t, a_t, s_{t+1})= r_t + \gamma V^\pi(s_{t+1}) - Q^\pi(s_t, a_t)$~\citep{pan2024skill}.
    Note that $B^\pi$ satisfies $\E[B^\pi(s_t, a_t, s_{t+1})|s_t, a_t]=0$.
    We can now rewrite the estimator:
    \begin{equation}
        \hat {g}_{\mathrm{ABC}}^\pi
        =\left(\sum_{t=0}^\infty\gamma^tB^\pi(s_t, a_t, s_{t+1})\right)\nabla\log\pi_\theta(a_0|s_0)
        + \sum_{a}A^\pi(s_0,a)\nabla\pi_\theta(a|s_0).
    \end{equation}
    Note that the second term $\sum_{a}A^\pi(s_0,a)\nabla\pi_\theta(a|s_0)$ can be viewed as a function of $s_0$.
    As such, $\E[(\sum_{a}A^\pi(s_0,a)\nabla\pi_\theta(a|s_0))D^T]=0$.

    For the $B^\pi$ terms, we show that 
    \begin{equation}
        \E[B^\pi(s_t, a_t, s_{t+1})\Delta A(s_{t'}, a_{t'})\nabla\log\pi_\theta(a_0|s_0)\nabla\log\pi_\theta(a_0|s_0)^T]=0.
    \end{equation}

    If $t'\leq t$, 
    \begin{align*}
        &\E[B^\pi(s_t, a_t, s_{t+1})\Delta A(s_{t'}, a_{t'})\nabla\log\pi_\theta(a_0|s_0)\nabla\log\pi_\theta(a_0|s_0)^T]\\
        =&\E[\E[B^\pi(s_t, a_t, s_{t+1})\Delta A(s_{t'}, a_{t'})\nabla\log\pi_\theta(a_0|s_0)\nabla\log\pi_\theta(a_0|s_0)^T|s_0, a_0,\dots,s_t, a_t]]\\
        =&\E[\E[B^\pi(s_t, a_t, s_{t+1})|s_t, a_t]\Delta A(s_{t'}, a_{t'})\nabla\log\pi_\theta(a_0|s_0)\nabla\log\pi_\theta(a_0|s_0)^T]\\
        =&0
    \end{align*}    
    If $t'>t$, 
    \begin{align*}
        &\E[B^\pi(s_t, a_t, s_{t+1})\Delta A(s_{t'}, a_{t'})\nabla\log\pi_\theta(a_0|s_0)\nabla\log\pi_\theta(a_0|s_0)^T]\\
        =&\E[\E[B^\pi(s_t, a_t, s_{t+1})\Delta A(s_{t'}, a_{t'})\nabla\log\pi_\theta(a_0|s_0)\nabla\log\pi_\theta(a_0|s_0)^T|s_0, a_0,\dots,s_{t'}]]\\
        =&\E[B^\pi(s_t, a_t, s_{t+1})\E[\Delta A(s_{t'}, a_{t'})|s_{t'}]\nabla\log\pi_\theta(a_0|s_0)\nabla\log\pi_\theta(a_0|s_0)^T]\\
        =&0
    \end{align*} 
    The terms involving $\Delta V(s_0)\nabla\log\pi_\theta(a_0|s_0)$ and $\Delta A(s_0, a)\nabla\pi_\theta(a|s_0)$ similarly vanish by the same argument. 
    Consequently, $\mathrm{Cov}(\hat g_{\mathrm{ABC}}^\pi,D)=0$, and
    \begin{equation}
    \operatorname{Cov}(\hat g_{\mathrm{ABC}}^\pi,D) + 
    \operatorname{Cov}(D,\hat g_{\mathrm{ABC}}^\pi) =0.
    \end{equation}
\end{proof}

\paragraph{Remark}
The covariance optimality result applies to individual gradient contributions.
In practice (e.g., Algorithm~\ref{alg:abc}), we sum the gradients along the trajectories.
This introduces additional covariances between the gradients across time-steps, so optimality of the individual gradients does not, in general, imply the optimality of their sum.
Unbiasedness, however, is preserved.

\errorbound*
\begin{proof}
    From Theorem~\ref{thm:abc}, we have    \begin{equation}
        \Sigma(\hat{A}, \hat{V}) = \Sigma(A^\pi, V^\pi) + \E\left[DD^T\right].
    \end{equation}
    We first denote $\delta=G-\sum_{t=0}^\infty\gamma^t\hat{A}(s_t, a_t) - \hat{V}(s_0)$.
    $D$ can then be rewritten into
    \begin{align}
        D=\delta\nabla\log\pi_\theta(a_0|s_0) - \E[\delta \nabla\log\pi_\theta(a_0|s_0)|s_0].
    \end{align}
    We have
    \begin{align}
        DD^T &= \delta^2\nabla\log\pi_\theta(a_0|s_0)\nabla\log\pi_\theta(a_0|s_0)^T \\
        &-\delta\nabla\log\pi_\theta(a_0|s_0)\E[\delta \nabla\log\pi_\theta(a_0|s_0)|s_0]^T\\
        &-\E[\delta \nabla\log\pi_\theta(a_0|s_0)|s_0]\delta\nabla\log\pi_\theta(a_0|s_0)^T\\
        &+\E[\delta \nabla\log\pi_\theta(a_0|s_0)|s_0]\E[\delta \nabla\log\pi_\theta(a_0|s_0)|s_0]^T.
    \end{align}
    By law of total expectation, we have
    \begin{align}
        \E[DD^T] &= \E[\delta^2\nabla\log\pi_\theta(a_0|s_0)\nabla\log\pi_\theta(a_0|s_0)^T] \\
        &-\E[\E[\delta \nabla\log\pi_\theta(a_0|s_0)|s_0]\E[\delta \nabla\log\pi_\theta(a_0|s_0)|s_0]^T].
    \end{align}
    Since $\E[\E[\delta \nabla\log\pi_\theta(a_0|s_0)|s_0]\E[\delta \nabla\log\pi_\theta(a_0|s_0)|s_0]^T]$ is positive semi-definite, we have
    \begin{align}
        \E[DD^T] \leq \E[\delta^2\nabla\log\pi_\theta(a_0|s_0)\nabla\log\pi_\theta(a_0|s_0)^T]
        \leq \epsilon^2 F
    \end{align}
\end{proof}

\section{Experiment Details}\label{app:exp}

\subsection{Offline Dataset}
We generate the offline dataset using models of different sizes to have a wider coverage of trajectories.
The dataset is generated once and used for all subsequent experiments.
We use vLLM~\citep{kwon2023efficient} for efficient trajectory generation.

\subsection{Additional Results}

\paragraph{GRPO} Figure~\ref{fig:grpo_curves} shows the Dr.~GRPO training curves.
Despite the length bias correction of Dr.~GRPO, we still see steady increase of completion lengths over the course of training.
This is, in fact, consistent with our method, where we also see increases in completion length despite being unbiased.
\begin{figure}
    \centering
    \includegraphics[width=0.98\linewidth]{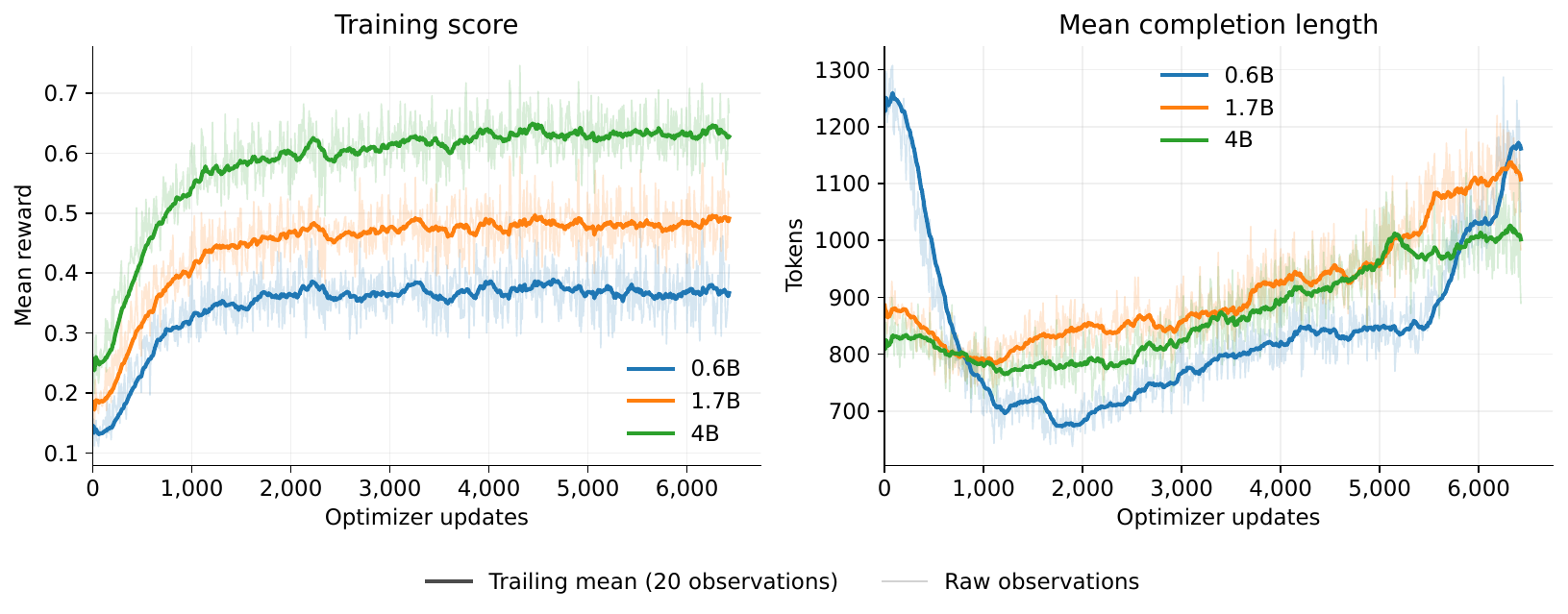}
    \caption{Dr.~GRPO training curves.}
    \label{fig:grpo_curves}
\end{figure}

\begin{figure}
    \centering
    \includegraphics[width=0.325\linewidth]{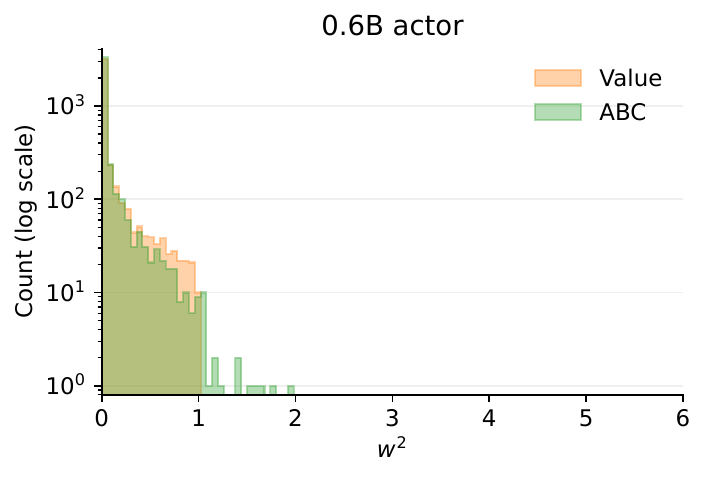}
    \includegraphics[width=0.325\linewidth]{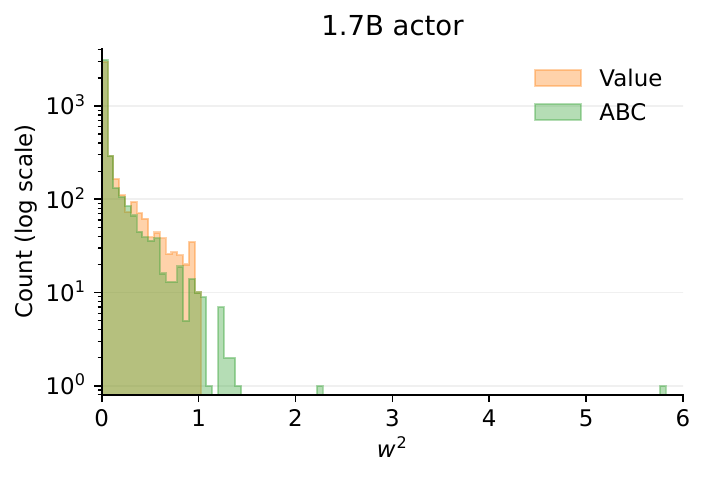}
    \includegraphics[width=0.325\linewidth]{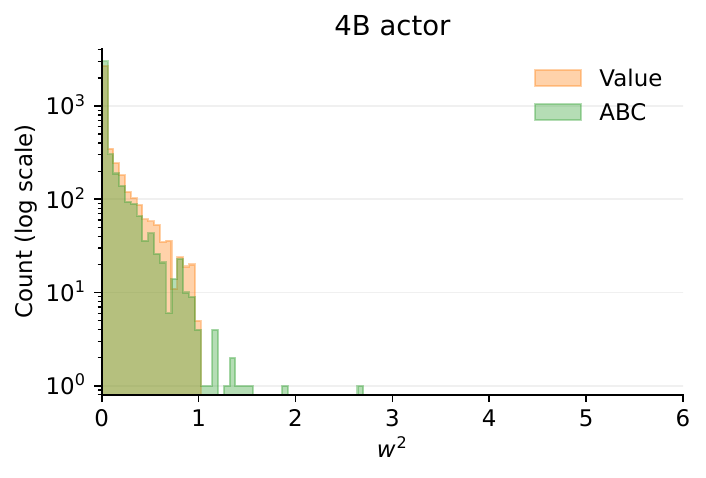}
    \caption{Residual ($w$) squared  distribution.}
    \label{fig:w_dist}
\end{figure}

\paragraph{Gradient Variance}
Figure~\ref{fig:variance} showed that the 1.7B model does not benefit from the advantages.
Recall that the policy gradient is weighted by either $(G - \hat{V}(s_0))$ (value baseline) or $(G - \hat{V}(s_0) - \sum_{t=0}^\infty\hat{A}(s_t, a_t))$ (ABC).
We denote this weighting by $w$.
Corollary~\ref{cor:errorbound} suggests that variance is closely related to this residual error.
Figure~\ref{fig:w_dist} shows that, in general, ABC's $w$ distributions tend to be closer to 0.
However, for the 1.7B model, we see an outlier with $w^2>5$, which may explain why the ABC gradient variance is higher.

\begin{figure}[t]
    \centering
    \includegraphics[width=0.98\linewidth]{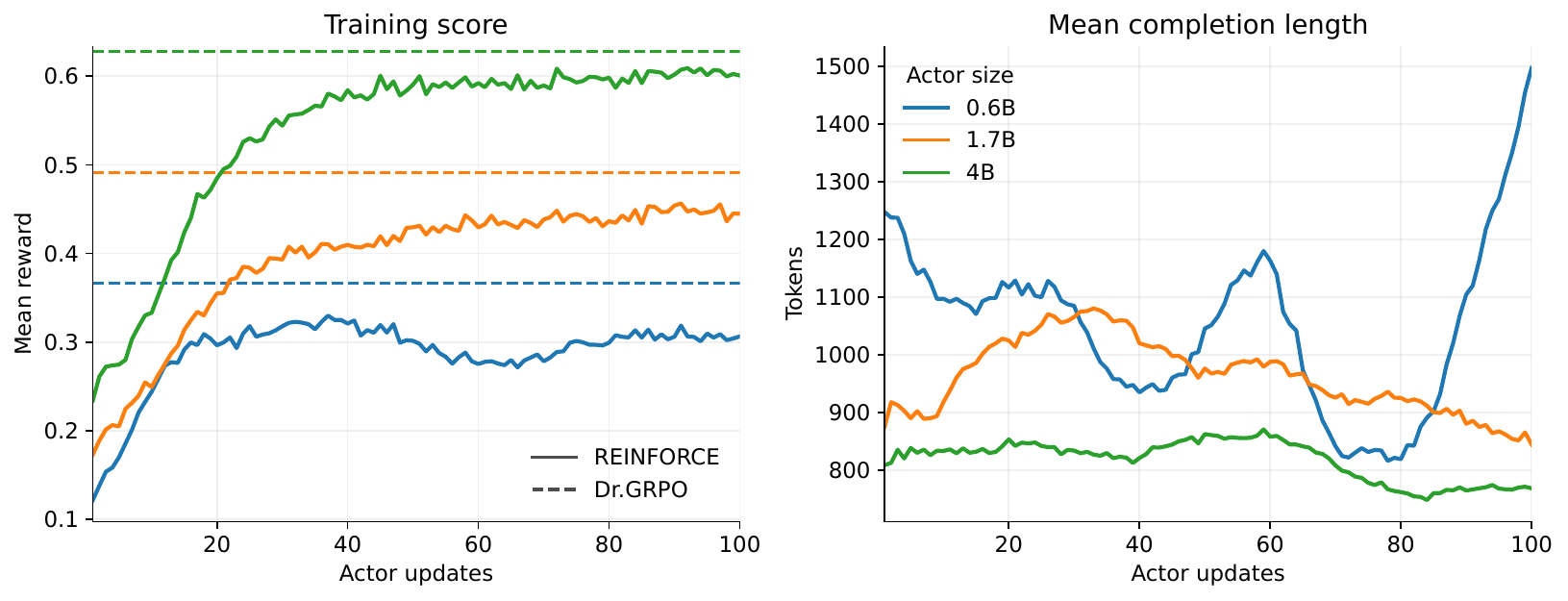}
    \caption{REINFORCE training curves.}
    \label{fig:reinforce_curves}
\end{figure}
\paragraph{REINFORCE}
Figure~\ref{fig:reinforce_curves} summarizes the learning curves for REINFORCE.

\begin{figure}[t]
    \centering
    \noindent
    \begin{minipage}{0.48\textwidth}
        \centering
        \includegraphics[width=\linewidth]{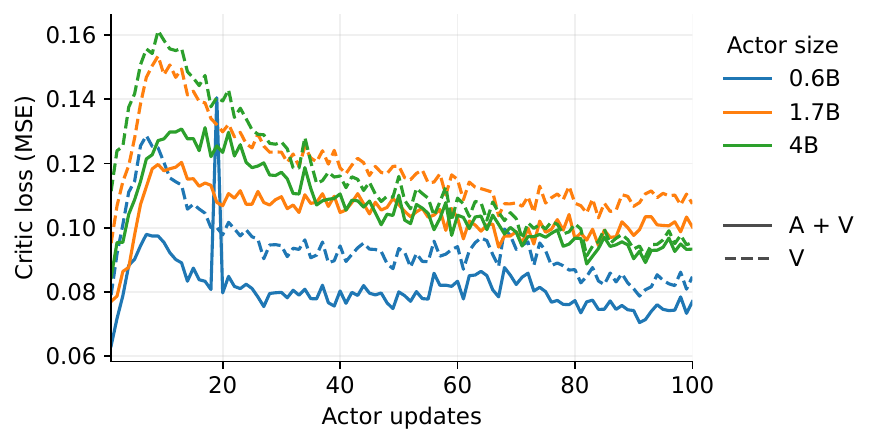}
        \caption{Critic loss during online training.}
        \label{fig:online_critic}
    \end{minipage}
    \hfill
    \begin{minipage}{0.49\textwidth}
        \centering
        \includegraphics[width=\linewidth]{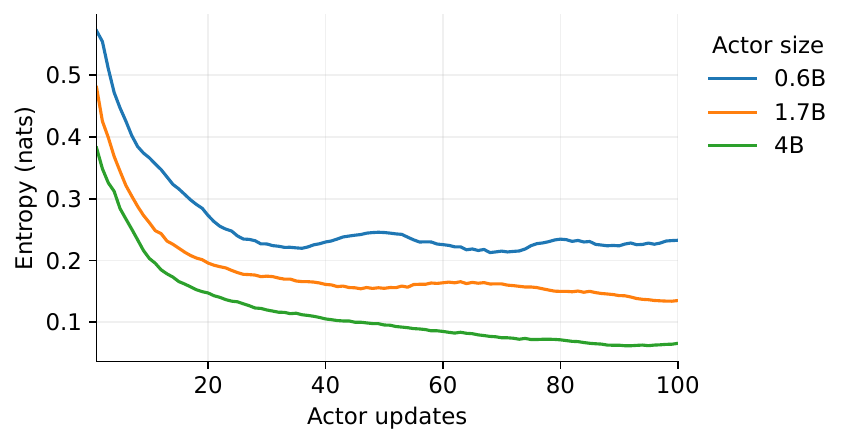}
        \caption{Policy entropy during online training.}
        \label{fig:online_entropy}
    \end{minipage}
\end{figure}

\paragraph{Entropy and Critic Loss}
In Section~\ref{sec:exp}, we observed the pattern that larger actors tend to have a smaller gap between loss with and without advantages.
Figure~\ref{fig:online_critic} shows the critic loss with respect to different actor sizes.
For the 4B model, we see that the gap almost vanishes towards the end of training.
This can likely be explained by the gradual decrease of the entropy.
Recall that the advantage function explains away part of the variance of the return caused by the actions.
As such, if actions are near deterministic, then the advantages will become less significant.

\subsection{Hyperparameters}

\paragraph{Dr.~GRPO baseline}
We use the implementation by \citet{vonwerra2020trl}, which includes various corrections that are not present in the original implementation (e.g., training-inference mismatch).
Table~\ref{tab:grpo-hyperparameters} summarizes the hyperparameters used for the GRPO baselines in Section~\ref{sec:exp}.
The hyperparameters largely follow the ones used by \citet{liu2025understanding}, except that we increase the completion length to 4096, increase the number of generations per prompt from 8 to 16, and use LoRA for fine-tuning.
We have also tried tuning the learning rates but found that larger learning rates often lead to performance collapsing.

\paragraph{Offline DAE Critic Training}
Table~\ref{tab:offline_dae_hyperparameters} summarizes the hyperparameters used for the offline training.
We found that using a much smaller learning rate for the advantage head significantly stabilizes learning.
In our early experiments, we tried using the same learning rate across all parameters and found that the loss became unstable.
This is likely due to the fact that the DAE objective involves a sum over the advantages across time-steps, which makes it more sensitive to the output advantages.
In addition, we initialize the advantage and value heads to zero, which also stabilizes early training.

\paragraph{Online Actor-Critic Training}
Table~\ref{tab:online_ac_hyperparameters} summarizes the hyperparameters for the online RL training.
In general, the actor configuration follows the ones used in the Dr.~GRPO baseline (e.g., LoRA configuration, Adam $\beta$).
The critic hyperparameters largely follow the ones used in the offline training phase (cf. Table~\ref{tab:offline_dae_hyperparameters}), except we lowered the learning rate.
For critic training, we only pass through the online-collected trajectories once.
As such, the number of gradient steps per iteration ($K$ in Algorithm~\ref{alg:abc}) is $\frac{8192}{256}=32$.
\begin{table}[t]
\centering
\small
\begin{tabular}{ll}
\toprule
Hyperparameter & Value \\
\midrule
\multicolumn{2}{l}{\textit{Optimization and objective}} \\
Objective & Dr. GRPO~\citep{liu2025understanding} \\
Optimizer & AdamW \\
Learning rate & $1\times10^{-6}$ (constant)\\
Adam $(\beta_1,\beta_2,\epsilon)$ & $(0.9,0.95,10^{-8})$ \\
Weight decay & 0 \\
Maximum gradient norm & 1.0 \\
Policy-ratio clipping $\epsilon$ & $0.2$ \\
KL penalty coefficient & 0 \\
Entropy bonus coefficient & 0 \\
Policy importance-sampling granularity & Token \\
vLLM importance-sampling correction & Sequence mask, upper threshold 3.0 \\
\midrule
\multicolumn{2}{l}{\textit{Batching and rollout sampling}} \\
Microbatch size & 16 completions \\
Gradient accumulation steps & 16 \\
Completions per prompt & 16 \\
Prompt groups per rollout batch & 128 \\
Global rollout batch size & 2048 completions \\
Optimizer updates per rollout batch & 8 \\
Training passes per rollout batch & 1 \\
Maximum completion length & 4096 tokens \\
\midrule
\multicolumn{2}{l}{\textit{Parameter-efficient training}} \\
LoRA rank / $\alpha$ & 128 / 256 \\
LoRA target layers & all hidden linear layers \\
\bottomrule
\end{tabular}
\caption{Hyperparameters for the GRPO baselines.}
\label{tab:grpo-hyperparameters}
\end{table}

\begin{table}[t]
\centering
\small
\begin{tabular}{ll}
\toprule
Hyperparameter & Value \\
\midrule
\multicolumn{2}{l}{\textit{Optimization and objective}} \\
Optimizer & AdamW \\
Batch size & 256 trajectories \\
Learning rate (advantage head) & $2.5\times10^{-6}$ \\
Learning rate (others) & $2.5\times10^{-5}$ \\
Learning rate schedule & Linear warmup (5\%) followed by linear annealing \\ 
Adam $(\beta_1,\beta_2,\epsilon)$ & $(0.9,0.99,10^{-8})$ \\
Weight decay & 0 \\
\midrule
\multicolumn{2}{l}{\textit{Parameter-efficient training}} \\
LoRA rank / $\alpha$ (rsLoRA) & 128 / 32 \\
LoRA target layers & all hidden linear layers \\
\bottomrule
\end{tabular}
\caption{Hyperparameters for the offline critic training.}
\label{tab:offline_dae_hyperparameters}
\end{table}

\begin{table}[t]
\centering
\small
\begin{tabular}{ll}
\toprule
Hyperparameter & Value \\
\midrule
\multicolumn{2}{l}{\textit{Shared Hyperparameters}} \\
Optimizer & AdamW \\
Learning rate schedule & constant \\
Weight decay & 0 \\
Rollout Batch size & $8192$ trajectories \\
\midrule
\multicolumn{2}{l}{\textit{Actor Hyperparameters}} \\
Learning rate (actor) & $1\times10^{-5}$ \\
Adam $(\beta_1,\beta_2,\epsilon)$ & $(0.9,0.95,10^{-8})$ \\
LoRA rank / $\alpha$ & 128 / 256 \\
LoRA target layers & all hidden linear layers \\
\midrule
\multicolumn{2}{l}{\textit{Critic Hyperparameters}} \\
Learning rate (advantage head) & 
$1\times10^{-6}$ \\
Learning rate (others) & $1\times10^{-5}$ \\
Adam $(\beta_1,\beta_2,\epsilon)$ & $(0.9,0.99,10^{-8})$ \\
Mini Batch size & $256$ trajectories \\
\bottomrule
\end{tabular}
\caption{Hyperparameters for the online actor-critic training.}
\label{tab:online_ac_hyperparameters}
\end{table}

\subsection{Implementation Details}
\paragraph{Reducing memory cost}
Training with DAE requires enforcing the centering constraint, which is done in practice by parametrizing the function in the following way:
\begin{equation*}
    \hat{A}_\theta(s,a) = f_\theta(s,a) - \sum_{a}\pi(a|s)f_\theta(s,a).
\end{equation*}
This ensures that the centering constraint is always satisfied.
However, naively implementing this can be expensive as it requires storing two tensors ($\pi(a|s)$, $f_\theta(s,a)$) of shape $(T,V)$, where $T$ is the sequence length and $V$ is the vocabulary size (typically in the order of $\sim$100000).
Note that the output layer is linear with respect to the output of the base model (i.e., $f_\theta(s,a) = h_\theta(s)W$, where $W$ is the output linear layer for the advantage head with dimension $(C, V)$).
Since $C$ (${\sim}1000$) is usually much smaller than $V$, we can reorder the operation by:
\begin{align*}
    \sum_{a'} \pi(a'|s)f_\theta(s,a') &= \sum_{a'}\sum_c \pi(a'|s) h_\theta(s)_{c}W_{ca'}\\
    &=\sum_c\left(\sum_{a'} W_{ca'}\pi(a'|s)\right) h_\theta(s)_{c}.
\end{align*}
This avoids constructing the full tensor $f_\theta(s, a')$, and reduces its memory cost from $(T, V)$ to $(T, C)$.
For future work, it might be valuable to try FlashAttention style fusion of the softmax-value product to further reduce memory footprint~\citep{dao2022flashattention}. 

\paragraph{Centering in the ABC estimator}
Recall that the ABC estimator requires computing the following term:
\begin{equation*}
    \sum_a \hat{A}(s,a)\nabla\pi_\theta(a|s).
\end{equation*}
Note that in practice, we center the advantages via:
\begin{equation*}
    \hat{A}(s,a) = f(s,a) - \sum_{a'}\pi(a'|s)f(s,a'),
\end{equation*}
where $f$ is the underlying approximator (linear advantage head output).
This means that $\hat{A}(s,a)$ and $f(s,a)$ differ only by a function of $s$ (i.e., $\sum_{a}\pi(a|s)f(s,a)$).
As we only care about the gradient for the policy in this case, we can ignore the centering because:
\begin{align}
    \sum_a \hat{A}(s,a)\nabla\pi_\theta(a|s) &= \sum_a f(s,a)\nabla\pi_\theta(a|s) - \sum_a\left(\sum_{a'}\pi(a'|s)f(s,a')\right)\nabla\pi_\theta(a|s) \\
    &= \sum_a f(s,a)\nabla\pi_\theta(a|s) - \left(\sum_{a'}\pi(a'|s)f(s,a')\right)\left(\sum_a\nabla\pi_\theta(a|s)\right) \\
    &= \sum_a f(s,a)\nabla\pi_\theta(a|s)
\end{align}

\end{document}